\newtheorem{example}{Example}
\newtheorem{definition}{Definition}
\newtheorem{proposition}{Proposition}
\newcommand{\ol}[1]{\overline{#1}}
\newcommand{\cspMIof}[2]{\ensuremath{\mathit{CR}^{#2}(#1)}} %
\newcommand{\OCFsolutionsOf}[1]{\ensuremath{\mathit{Sol_{OCF}}(#1)}}
\newcommand{\OCFsolutionsRnMI}[1]{\ensuremath{\OCFsolutionsOf{\cspMIof{\R_n}{n-1}}}}
\newcommand{\syntheticKB}[1]{\texttt{kb\_synth<\(n\)>\_c<\(2n\!\!-\!\!1\)>.pl}}
\newcommand{\R}{\ensuremath{\mathcal R}}
\newcommand{\Mod}{\mbox{\it Mod}\,}
\newcommand*{\centernot}{%
	\mathpalette\@centernot
}
\def\@centernot#1#2{%
	\mathrel{%
		\rlap{%
			\settowidth\dimen@{$\m@th#1{#2}$}%
			\kern.5\dimen@
			\settowidth\dimen@{$\m@th#1=$}%
			\kern-.5\dimen@
			$\m@th#1\not$%
		}%
		{#2}%
	}%
}
\DeclareRobustCommand\nmableitSymb{\mathrel|\mkern-.5mu\joinrel\sim} %
\newcommand{\nmableit}{\ensuremath{\mbox{$\,\nmableitSymb\,$}}} %
\newcommand{\beweisendezeichen}%
{\penalty50\hspace*{0pt plus 1fil}\parfillskip=0pt\mbox{$\Box$}}
\newcommand{\fussnoteOhneMarkierung}[1]%
{%
\footnote{#1}%
\addtocounter{footnote}{-1}%
}
\newcommand{\satzCL}[2]{\ensuremath{(#1|#2)}}
\newlength{\abstand}
\newcommand{\symbolForInfSkept}{\ensuremath{\mathit{sk}}}
\newcommand{\symbolInd}{\ensuremath{\mathit{\mathit{O}}}}
\newcommand{\kbableitModMinName}[3]{\ensuremath{\nmableit^{\!\!\!\!#2,#3}_{\!#1}}}
\newcommand{\indinf}[3]{\ensuremath{\kbableitModMinName{\R}{\symbolForInfSkept}{\symbolInd}}}
\newcommand{\NFallKonditionale}[1]{\ensuremath{\mathit{NFC}(#1)}}
\newcommand{\closureMatrixName}[1]{\ensuremath{\mathit{CM}}} %
\newcommand{\listL}[1]{\ensuremath{\lbrack\mathcal{L}_{\Sigma}\rbrack}} %
\newcommand{\listNFC}[1]{\ensuremath{\lbrack\NFallKonditionale{\Sigma}\rbrack}} %
\newcommand{\preceqdotLOKAL}{\mathrel{\mathpalette\LOKALpr@ceqd@t\relax}}
\newcommand{\LOKALpr@ceqd@t}[2]{%
  \begingroup
  \sbox\z@{$#1\prec$}\sbox\tw@{$#1\preccurlyeq$}%
  \dimen@=0.5\dimexpr\ht\tw@-\ht\z@\relax
  {\preccurlyeq}%
  \mkern-4mu
  \raisebox{\dimen@}{$\m@th#1\cdot$}%
  \endgroup
}
\newcommand{\preceqprecdotLOKAL}{\mathrel{\mathpalette\LOKALprpr@ceqd@t\relax}}
\newcommand{\LOKALprpr@ceqd@t}[2]{%
  \begingroup
  \sbox\z@{$#1\prec$}\sbox\tw@{$#1\preccurlyeq$}%
  \dimen@=0.5\dimexpr\ht\tw@-\ht\z@\relax
  {\preccurlyeq}%
  \mkern-9mu
  {\prec}%
  \mkern-4mu
  \raisebox{\dimen@}{$\m@th#1\cdot$}%
  \endgroup
}
\newcommand{\displayTransformationRule}[3]%
{#1 &  #2 & #3}
\newcommand{\displayTRrule}[3]%
{#1 &  #2 & #3}
\newcommand\satzCLab[4]%
\newcommand{\nfcab}[2][no]{%
    \IfEqCase{#2}{%
        {01}{\ensuremath{\satzCLab{#1}{#2}{3}{3,2}}}%
        {02}{\ensuremath{\satzCLab{#1}{#2}{2}{3,2}}}%
        {03}{\ensuremath{\satzCLab{#1}{#2}{3}{3,0}}}%
        {04}{\ensuremath{\satzCLab{#1}{#2}{0}{3,0}}}%
        {05}{\ensuremath{\satzCLab{#1}{#2}{2}{2,1}}}%
        {06}{\ensuremath{\satzCLab{#1}{#2}{2}{2,0}}}%
        {07}{\ensuremath{\satzCLab{#1}{#2}{0}{2,0}}}%
        {08}{\ensuremath{\satzCLab{#1}{#2}{3}{3,2,1}}}%
        {09}{\ensuremath{\satzCLab{#1}{#2}{2}{3,2,1}}}%
        {12}{\ensuremath{\cb{\satzCLab{#1}{#2}{3}{3,2,0}}}}%
        {13}{\ensuremath{\cb{\satzCLab{#1}{#2}{2}{3,2,0}}}}%
        {14}{\ensuremath{\cb{\satzCLab{#1}{#2}{0}{3,2,0}}}}%
        {18}{\ensuremath{\cb{\satzCLab{#1}{#2}{2}{2,1,0}}}}%
        {19}{\ensuremath{\cb{\satzCLab{#1}{#2}{0}{2,1,0}}}}%
        {10}{\ensuremath{\cb{\satzCLab{#1}{#2}{3,2}{3,2,1}}}}%
        {11}{\ensuremath{\cb{\satzCLab{#1}{#2}{2,1}{3,2,1}}}}%
        {15}{\ensuremath{\cb{\satzCLab{#1}{#2}{3,2}{3,2,0}}}}%
        {16}{\ensuremath{\cb{\satzCLab{#1}{#2}{3,0}{3,2,0}}}}%
        {17}{\ensuremath{\cb{\satzCLab{#1}{#2}{2,0}{3,2,0}}}}%
        {20}{\ensuremath{\satzCLab{#1}{#2}{2,1}{2,1,0}}}%
        {21}{\ensuremath{\satzCLab{#1}{#2}{2,0}{2,1,0}}}%
        {22}{\ensuremath{\satzCLab{#1}{#2}{3}{3,2,1,0}}}%
        {23}{\ensuremath{\satzCLab{#1}{#2}{2}{3,2,1,0}}}%
        {24}{\ensuremath{\satzCLab{#1}{#2}{0}{3,2,1,0}}}%
        {25}{\ensuremath{\satzCLab{#1}{#2}{3,2}{3,2,1,0}}}%
        {26}{\ensuremath{\satzCLab{#1}{#2}{3,0}{3,2,1,0}}}%
        {27}{\ensuremath{\satzCLab{#1}{#2}{2,1}{3,2,1,0}}}%
        {28}{\ensuremath{\satzCLab{#1}{#2}{2,0}{3,2,1,0}}}%
        {29}{\ensuremath{\satzCLab{#1}{#2}{3,2,1}{3,2,1,0}}}%
        {30}{\ensuremath{\satzCLab{#1}{#2}{3,2,0}{3,2,1,0}}}%
        {31}{\ensuremath{\satzCLab{#1}{#2}{2,1,0}{3,2,1,0}}}%
        {32}{\ensuremath{\satzCLab{#1}{#2}{3}{3,1}}}%
        {33}{\ensuremath{\satzCLab{#1}{#2}{1}{3,1}}}%
        {34}{\ensuremath{\satzCLab{#1}{#2}{1}{2,1}}}%
        {35}{\ensuremath{\satzCLab{#1}{#2}{1}{1,0}}}%
        {36}{\ensuremath{\satzCLab{#1}{#2}{0}{1,0}}}%
        {37}{\ensuremath{\satzCLab{#1}{#2}{1}{3,2,1}}}%
        {39}{\ensuremath{\cb{\satzCLab{#1}{#2}{3}{3,1,0}}}}%
        {40}{\ensuremath{\cb{\satzCLab{#1}{#2}{1}{3,1,0}}}}%
        {41}{\ensuremath{\cb{\satzCLab{#1}{#2}{0}{3,1,0}}}}%
        {45}{\ensuremath{\cb{\satzCLab{#1}{#2}{1}{2,1,0}}}}%
        {38}{\ensuremath{\cb{\satzCLab{#1}{#2}{3,1}{3,2,1}}}}%
        {42}{\ensuremath{\cb{\satzCLab{#1}{#2}{3,1}{3,1,0}}}}%
        {43}{\ensuremath{\cb{\satzCLab{#1}{#2}{3,0}{3,1,0}}}}%
        {44}{\ensuremath{\cb{\satzCLab{#1}{#2}{1,0}{3,1,0}}}}%
        {46}{\ensuremath{\satzCLab{#1}{#2}{1,0}{2,1,0}}}%
        {47}{\ensuremath{\satzCLab{#1}{#2}{1}{3,2,1,0}}}%
        {48}{\ensuremath{\satzCLab{#1}{#2}{3,1}{3,2,1,0}}}%
        {49}{\ensuremath{\satzCLab{#1}{#2}{1,0}{3,2,1,0}}}%
        {50}{\ensuremath{\satzCLab{#1}{#2}{3,1,0}{3,2,1,0}}}%
    }[\PackageError{nfcab}{Undefined option to nfcab: #1}{}]%
}%
\newcommand{\nfcabChar}[2][no]{%
    \IfEqCase{#2}{%
        {01}{\ensuremath{\satzCLab{#1}{#2}{ab}{ab,a\ol{b}}}}%
        {02}{\ensuremath{\satzCLab{#1}{#2}{a\ol{b}}{ab,a\ol{b}}}}%
        {03}{\ensuremath{\satzCLab{#1}{#2}{ab}{ab,\ol{a}\ol{b}}}}%
        {04}{\ensuremath{\satzCLab{#1}{#2}{\ol{a}\ol{b}}{ab,\ol{a}\ol{b}}}}%
        {05}{\ensuremath{\satzCLab{#1}{#2}{a\ol{b}}{a\ol{b},\ol{a}b}}}%
        {06}{\ensuremath{\satzCLab{#1}{#2}{a\ol{b}}{a\ol{b},\ol{a}\ol{b}}}}%
        {07}{\ensuremath{\satzCLab{#1}{#2}{\ol{a}\ol{b}}{a\ol{b},\ol{a}\ol{b}}}}%
        {08}{\ensuremath{\satzCLab{#1}{#2}{ab}{ab,a\ol{b},\ol{a}b}}}%
        {09}{\ensuremath{\satzCLab{#1}{#2}{a\ol{b}}{ab,a\ol{b},\ol{a}b}}}%
        {12}{\ensuremath{\satzCLab{#1}{#2}{ab}{ab,a\ol{b},\ol{a}\ol{b}}}}%
        {13}{\ensuremath{\satzCLab{#1}{#2}{a\ol{b}}{ab,a\ol{b},\ol{a}\ol{b}}}}%
        {14}{\ensuremath{\satzCLab{#1}{#2}{\ol{a}\ol{b}}{ab,a\ol{b},\ol{a}\ol{b}}}}%
        {18}{\ensuremath{\satzCLab{#1}{#2}{a\ol{b}}{a\ol{b},\ol{a}b,\ol{a}\ol{b}}}}%
        {19}{\ensuremath{\satzCLab{#1}{#2}{\ol{a}\ol{b}}{a\ol{b},\ol{a}b,\ol{a}\ol{b}}}}%
        {10}{\ensuremath{\satzCLab{#1}{#2}{ab,a\ol{b}}{ab,a\ol{b},\ol{a}b}}}%
        {11}{\ensuremath{\satzCLab{#1}{#2}{a\ol{b},\ol{a}b}{ab,a\ol{b},\ol{a}b}}}%
        {15}{\ensuremath{\satzCLab{#1}{#2}{ab,a\ol{b}}{ab,a\ol{b},\ol{a}\ol{b}}}}%
        {16}{\ensuremath{\satzCLab{#1}{#2}{ab,\ol{a}\ol{b}}{ab,a\ol{b},\ol{a}\ol{b}}}}%
        {17}{\ensuremath{\satzCLab{#1}{#2}{a\ol{b},\ol{a}\ol{b}}{ab,a\ol{b},\ol{a}\ol{b}}}}%
        {20}{\ensuremath{\satzCLab{#1}{#2}{a\ol{b},\ol{a}b}{a\ol{b},\ol{a}b,\ol{a}\ol{b}}}}%
        {21}{\ensuremath{\satzCLab{#1}{#2}{a\ol{b},\ol{a}\ol{b}}{a\ol{b},\ol{a}b,\ol{a}\ol{b}}}}%
        {22}{\ensuremath{\satzCLab{#1}{#2}{ab}{ab,a\ol{b},\ol{a}b,\ol{a}\ol{b}}}}%
        {23}{\ensuremath{\satzCLab{#1}{#2}{a\ol{b}}{ab,a\ol{b},\ol{a}b,\ol{a}\ol{b}}}}%
        {24}{\ensuremath{\satzCLab{#1}{#2}{\ol{a}\ol{b}}{ab,a\ol{b},\ol{a}b,\ol{a}\ol{b}}}}%
        {25}{\ensuremath{\satzCLab{#1}{#2}{ab,a\ol{b}}{ab,a\ol{b},\ol{a}b,\ol{a}\ol{b}}}}%
        {26}{\ensuremath{\satzCLab{#1}{#2}{ab,\ol{a}\ol{b}}{ab,a\ol{b},\ol{a}b,\ol{a}\ol{b}}}}%
        {27}{\ensuremath{\satzCLab{#1}{#2}{a\ol{b},\ol{a}b}{ab,a\ol{b},\ol{a}b,\ol{a}\ol{b}}}}%
        {28}{\ensuremath{\satzCLab{#1}{#2}{a\ol{b},\ol{a}\ol{b}}{ab,a\ol{b},\ol{a}b,\ol{a}\ol{b}}}}%
        {29}{\ensuremath{\satzCLab{#1}{#2}{ab,a\ol{b},\ol{a}b}{ab,a\ol{b},\ol{a}b,\ol{a}\ol{b}}}}%
        {30}{\ensuremath{\satzCLab{#1}{#2}{ab,a\ol{b},\ol{a}\ol{b}}{ab,a\ol{b},\ol{a}b,\ol{a}\ol{b}}}}%
        {31}{\ensuremath{\satzCLab{#1}{#2}{a\ol{b},\ol{a}b,\ol{a}\ol{b}}{ab,a\ol{b},\ol{a}b,\ol{a}\ol{b}}}}%
        {32}{\ensuremath{\satzCLab{#1}{#2}{ab}{ab,\ol{a}b}}}%
        {33}{\ensuremath{\satzCLab{#1}{#2}{\ol{a}b}{ab,\ol{a}b}}}%
        {34}{\ensuremath{\satzCLab{#1}{#2}{\ol{a}b}{a\ol{b},\ol{a}b}}}%
        {35}{\ensuremath{\satzCLab{#1}{#2}{\ol{a}b}{\ol{a}b,\ol{a}\ol{b}}}}%
        {36}{\ensuremath{\satzCLab{#1}{#2}{\ol{a}\ol{b}}{\ol{a}b,\ol{a}\ol{b}}}}%
        {37}{\ensuremath{\satzCLab{#1}{#2}{\ol{a}b}{ab,a\ol{b},\ol{a}b}}}%
        {39}{\ensuremath{\satzCLab{#1}{#2}{ab}{ab,\ol{a}b,\ol{a}\ol{b}}}}%
        {40}{\ensuremath{\satzCLab{#1}{#2}{\ol{a}b}{ab,\ol{a}b,\ol{a}\ol{b}}}}%
        {41}{\ensuremath{\satzCLab{#1}{#2}{\ol{a}\ol{b}}{ab,\ol{a}b,\ol{a}\ol{b}}}}%
        {45}{\ensuremath{\satzCLab{#1}{#2}{\ol{a}b}{a\ol{b},\ol{a}b,\ol{a}\ol{b}}}}%
        {38}{\ensuremath{\satzCLab{#1}{#2}{ab,\ol{a}b}{ab,a\ol{b},\ol{a}b}}}%
        {42}{\ensuremath{\satzCLab{#1}{#2}{ab,\ol{a}b}{ab,\ol{a}b,\ol{a}\ol{b}}}}%
        {43}{\ensuremath{\satzCLab{#1}{#2}{ab,\ol{a}\ol{b}}{ab,\ol{a}b,\ol{a}\ol{b}}}}%
        {44}{\ensuremath{\satzCLab{#1}{#2}{\ol{a}b,\ol{a}\ol{b}}{ab,\ol{a}b,\ol{a}\ol{b}}}}%
        {46}{\ensuremath{\satzCLab{#1}{#2}{\ol{a}b,\ol{a}\ol{b}}{a\ol{b},\ol{a}b,\ol{a}\ol{b}}}}%
        {47}{\ensuremath{\satzCLab{#1}{#2}{\ol{a}b}{ab,a\ol{b},\ol{a}b,\ol{a}\ol{b}}}}%
        {48}{\ensuremath{\satzCLab{#1}{#2}{ab,\ol{a}b}{ab,a\ol{b},\ol{a}b,\ol{a}\ol{b}}}}%
        {49}{\ensuremath{\satzCLab{#1}{#2}{\ol{a}b,\ol{a}\ol{b}}{ab,a\ol{b},\ol{a}b,\ol{a}\ol{b}}}}%
        {50}{\ensuremath{\satzCLab{#1}{#2}{ab,\ol{a}b,\ol{a}\ol{b}}{ab,a\ol{b},\ol{a}b,\ol{a}\ol{b}}}}%
    }[\PackageError{nfcab}{Undefined option to nfcab: #1}{}]%
}%
\newcommand{\nfcabNeu}[2][no]{%
    \IfEqCase{#2}{%
        {01}{\ensuremath{\satzCLab{#1}{#2}{3}{3,2}}}%
        {02}{\ensuremath{\satzCLab{#1}{#2}{3}{3,1}}}%
        {03}{\ensuremath{\satzCLab{#1}{#2}{2}{3,2}}}%
        {04}{\ensuremath{\satzCLab{#1}{#2}{1}{3,1}}}%
        {05}{\ensuremath{\satzCLab{#1}{#2}{3}{3,0}}}%
        {06}{\ensuremath{\satzCLab{#1}{#2}{0}{3,0}}}%
        {07}{\ensuremath{\satzCLab{#1}{#2}{2}{2,1}}}%
        {08}{\ensuremath{\satzCLab{#1}{#2}{1}{2,1}}}%
        {09}{\ensuremath{\satzCLab{#1}{#2}{2}{2,0}}}%
        {10}{\ensuremath{\satzCLab{#1}{#2}{1}{1,0}}}%
        {11}{\ensuremath{\satzCLab{#1}{#2}{0}{2,0}}}%
        {12}{\ensuremath{\satzCLab{#1}{#2}{0}{1,0}}}%
        {13}{\ensuremath{\satzCLab{#1}{#2}{3}{3,2,1}}}%
        {14}{\ensuremath{\satzCLab{#1}{#2}{2}{3,2,1}}}%
        {15}{\ensuremath{\satzCLab{#1}{#2}{1}{3,2,1}}}%
        {16}{\ensuremath{\satzCLab{#1}{#2}{3,2}{3,2,1}}}%
        {17}{\ensuremath{\satzCLab{#1}{#2}{3,1}{3,2,1}}}%
        {18}{\ensuremath{\satzCLab{#1}{#2}{2,1}{3,2,1}}}%
        {19}{\ensuremath{\satzCLab{#1}{#2}{3}{3,2,0}}}%
        {20}{\ensuremath{\satzCLab{#1}{#2}{3}{3,1,0}}}%
        {21}{\ensuremath{\satzCLab{#1}{#2}{2}{3,2,0}}}%
        {22}{\ensuremath{\satzCLab{#1}{#2}{1}{3,1,0}}}%
        {23}{\ensuremath{\satzCLab{#1}{#2}{0}{3,2,0}}}%
        {24}{\ensuremath{\satzCLab{#1}{#2}{0}{3,1,0}}}%
        {25}{\ensuremath{\satzCLab{#1}{#2}{3,2}{3,2,0}}}%
        {26}{\ensuremath{\satzCLab{#1}{#2}{3,1}{3,1,0}}}%
        {27}{\ensuremath{\satzCLab{#1}{#2}{3,0}{3,2,0}}}%
        {28}{\ensuremath{\satzCLab{#1}{#2}{3,0}{3,1,0}}}%
        {29}{\ensuremath{\satzCLab{#1}{#2}{2,0}{3,2,0}}}%
        {30}{\ensuremath{\satzCLab{#1}{#2}{1,0}{3,1,0}}}%
        {31}{\ensuremath{\satzCLab{#1}{#2}{2}{2,1,0}}}%
        {32}{\ensuremath{\satzCLab{#1}{#2}{1}{2,1,0}}}%
        {33}{\ensuremath{\satzCLab{#1}{#2}{0}{2,1,0}}}%
        {34}{\ensuremath{\satzCLab{#1}{#2}{2,1}{2,1,0}}}%
        {35}{\ensuremath{\satzCLab{#1}{#2}{2,0}{2,1,0}}}%
        {36}{\ensuremath{\satzCLab{#1}{#2}{1,0}{2,1,0}}}%
        {37}{\ensuremath{\satzCLab{#1}{#2}{3}{3,2,1,0}}}%
        {38}{\ensuremath{\satzCLab{#1}{#2}{2}{3,2,1,0}}}%
        {39}{\ensuremath{\satzCLab{#1}{#2}{1}{3,2,1,0}}}%
        {40}{\ensuremath{\satzCLab{#1}{#2}{0}{3,2,1,0}}}%
        {41}{\ensuremath{\satzCLab{#1}{#2}{3,2}{3,2,1,0}}}%
        {42}{\ensuremath{\satzCLab{#1}{#2}{3,1}{3,2,1,0}}}%
        {43}{\ensuremath{\satzCLab{#1}{#2}{3,0}{3,2,1,0}}}%
        {44}{\ensuremath{\satzCLab{#1}{#2}{2,1}{3,2,1,0}}}%
        {45}{\ensuremath{\satzCLab{#1}{#2}{2,0}{3,2,1,0}}}%
        {46}{\ensuremath{\satzCLab{#1}{#2}{1,0}{3,2,1,0}}}%
        {47}{\ensuremath{\satzCLab{#1}{#2}{3,2,1}{3,2,1,0}}}%
        {48}{\ensuremath{\satzCLab{#1}{#2}{3,2,0}{3,2,1,0}}}%
        {49}{\ensuremath{\satzCLab{#1}{#2}{3,1,0}{3,2,1,0}}}%
        {50}{\ensuremath{\satzCLab{#1}{#2}{2,1,0}{3,2,1,0}}}%
    }[\PackageError{nfcab}{Undefined option to nfcab: #1}{}]%
}%
\newcommand{\nfcabCharNeu}[2][no]{%
    \IfEqCase{#2}{%
        {01}{\ensuremath{\satzCLab{#1}{#2}{ab}{ab,a\ol{b}}}}%
        {02}{\ensuremath{\satzCLab{#1}{#2}{ab}{ab,\ol{a}b}}}%
        {03}{\ensuremath{\satzCLab{#1}{#2}{a\ol{b}}{ab,a\ol{b}}}}%
        {04}{\ensuremath{\satzCLab{#1}{#2}{\ol{a}b}{ab,\ol{a}b}}}%
        {05}{\ensuremath{\satzCLab{#1}{#2}{ab}{ab,\ol{a}\ol{b}}}}%
        {06}{\ensuremath{\satzCLab{#1}{#2}{\ol{a}\ol{b}}{ab,\ol{a}\ol{b}}}}%
        {07}{\ensuremath{\satzCLab{#1}{#2}{a\ol{b}}{a\ol{b},\ol{a}b}}}%
        {08}{\ensuremath{\satzCLab{#1}{#2}{\ol{a}b}{a\ol{b},\ol{a}b}}}%
        {09}{\ensuremath{\satzCLab{#1}{#2}{a\ol{b}}{a\ol{b},\ol{a}\ol{b}}}}%
        {10}{\ensuremath{\satzCLab{#1}{#2}{\ol{a}b}{\ol{a}b,\ol{a}\ol{b}}}}%
        {11}{\ensuremath{\satzCLab{#1}{#2}{\ol{a}\ol{b}}{a\ol{b},\ol{a}\ol{b}}}}%
        {12}{\ensuremath{\satzCLab{#1}{#2}{\ol{a}\ol{b}}{\ol{a}b,\ol{a}\ol{b}}}}%
        {13}{\ensuremath{\satzCLab{#1}{#2}{ab}{ab,a\ol{b},\ol{a}b}}}%
        {14}{\ensuremath{\satzCLab{#1}{#2}{a\ol{b}}{ab,a\ol{b},\ol{a}b}}}%
        {15}{\ensuremath{\satzCLab{#1}{#2}{\ol{a}b}{ab,a\ol{b},\ol{a}b}}}%
        {16}{\ensuremath{\satzCLab{#1}{#2}{ab,a\ol{b}}{ab,a\ol{b},\ol{a}b}}}%
        {17}{\ensuremath{\satzCLab{#1}{#2}{ab,\ol{a}b}{ab,a\ol{b},\ol{a}b}}}%
        {18}{\ensuremath{\satzCLab{#1}{#2}{a\ol{b},\ol{a}b}{ab,a\ol{b},\ol{a}b}}}%
        {19}{\ensuremath{\satzCLab{#1}{#2}{ab}{ab,a\ol{b},\ol{a}\ol{b}}}}%
        {20}{\ensuremath{\satzCLab{#1}{#2}{ab}{ab,\ol{a}b,\ol{a}\ol{b}}}}%
        {21}{\ensuremath{\satzCLab{#1}{#2}{a\ol{b}}{ab,a\ol{b},\ol{a}\ol{b}}}}%
        {22}{\ensuremath{\satzCLab{#1}{#2}{\ol{a}b}{ab,\ol{a}b,\ol{a}\ol{b}}}}%
        {23}{\ensuremath{\satzCLab{#1}{#2}{\ol{a}\ol{b}}{ab,a\ol{b},\ol{a}\ol{b}}}}%
        {24}{\ensuremath{\satzCLab{#1}{#2}{\ol{a}\ol{b}}{ab,\ol{a}b,\ol{a}\ol{b}}}}%
        {25}{\ensuremath{\satzCLab{#1}{#2}{ab,a\ol{b}}{ab,a\ol{b},\ol{a}\ol{b}}}}%
        {26}{\ensuremath{\satzCLab{#1}{#2}{ab,\ol{a}b}{ab,\ol{a}b,\ol{a}\ol{b}}}}%
        {27}{\ensuremath{\satzCLab{#1}{#2}{ab,\ol{a}\ol{b}}{ab,a\ol{b},\ol{a}\ol{b}}}}%
        {28}{\ensuremath{\satzCLab{#1}{#2}{ab,\ol{a}\ol{b}}{ab,\ol{a}b,\ol{a}\ol{b}}}}%
        {29}{\ensuremath{\satzCLab{#1}{#2}{a\ol{b},\ol{a}\ol{b}}{ab,a\ol{b},\ol{a}\ol{b}}}}%
        {30}{\ensuremath{\satzCLab{#1}{#2}{\ol{a}b,\ol{a}\ol{b}}{ab,\ol{a}b,\ol{a}\ol{b}}}}%
        {31}{\ensuremath{\satzCLab{#1}{#2}{a\ol{b}}{a\ol{b},\ol{a}b,\ol{a}\ol{b}}}}%
        {32}{\ensuremath{\satzCLab{#1}{#2}{\ol{a}b}{a\ol{b},\ol{a}b,\ol{a}\ol{b}}}}%
        {33}{\ensuremath{\satzCLab{#1}{#2}{\ol{a}\ol{b}}{a\ol{b},\ol{a}b,\ol{a}\ol{b}}}}%
        {34}{\ensuremath{\satzCLab{#1}{#2}{a\ol{b},\ol{a}b}{a\ol{b},\ol{a}b,\ol{a}\ol{b}}}}%
        {35}{\ensuremath{\satzCLab{#1}{#2}{a\ol{b},\ol{a}\ol{b}}{a\ol{b},\ol{a}b,\ol{a}\ol{b}}}}%
        {36}{\ensuremath{\satzCLab{#1}{#2}{\ol{a}b,\ol{a}\ol{b}}{a\ol{b},\ol{a}b,\ol{a}\ol{b}}}}%
        {37}{\ensuremath{\satzCLab{#1}{#2}{ab}{ab,a\ol{b},\ol{a}b,\ol{a}\ol{b}}}}%
        {38}{\ensuremath{\satzCLab{#1}{#2}{a\ol{b}}{ab,a\ol{b},\ol{a}b,\ol{a}\ol{b}}}}%
        {39}{\ensuremath{\satzCLab{#1}{#2}{\ol{a}b}{ab,a\ol{b},\ol{a}b,\ol{a}\ol{b}}}}%
        {40}{\ensuremath{\satzCLab{#1}{#2}{\ol{a}\ol{b}}{ab,a\ol{b},\ol{a}b,\ol{a}\ol{b}}}}%
        {41}{\ensuremath{\satzCLab{#1}{#2}{ab,a\ol{b}}{ab,a\ol{b},\ol{a}b,\ol{a}\ol{b}}}}%
        {42}{\ensuremath{\satzCLab{#1}{#2}{ab,\ol{a}b}{ab,a\ol{b},\ol{a}b,\ol{a}\ol{b}}}}%
        {43}{\ensuremath{\satzCLab{#1}{#2}{ab,\ol{a}\ol{b}}{ab,a\ol{b},\ol{a}b,\ol{a}\ol{b}}}}%
        {44}{\ensuremath{\satzCLab{#1}{#2}{a\ol{b},\ol{a}b}{ab,a\ol{b},\ol{a}b,\ol{a}\ol{b}}}}%
        {45}{\ensuremath{\satzCLab{#1}{#2}{a\ol{b},\ol{a}\ol{b}}{ab,a\ol{b},\ol{a}b,\ol{a}\ol{b}}}}%
        {46}{\ensuremath{\satzCLab{#1}{#2}{\ol{a}b,\ol{a}\ol{b}}{ab,a\ol{b},\ol{a}b,\ol{a}\ol{b}}}}%
        {47}{\ensuremath{\satzCLab{#1}{#2}{ab,a\ol{b},\ol{a}b}{ab,a\ol{b},\ol{a}b,\ol{a}\ol{b}}}}%
        {48}{\ensuremath{\satzCLab{#1}{#2}{ab,a\ol{b},\ol{a}\ol{b}}{ab,a\ol{b},\ol{a}b,\ol{a}\ol{b}}}}%
        {49}{\ensuremath{\satzCLab{#1}{#2}{ab,\ol{a}b,\ol{a}\ol{b}}{ab,a\ol{b},\ol{a}b,\ol{a}\ol{b}}}}%
        {50}{\ensuremath{\satzCLab{#1}{#2}{a\ol{b},\ol{a}b,\ol{a}\ol{b}}{ab,a\ol{b},\ol{a}b,\ol{a}\ol{b}}}}%
    }[\PackageError{nfcab}{Undefined option to nfcab: #1}{}]%
}%
\newcommand{\paarKlNr}[2]{\ensuremath{#1.#2}}
\newcommand{\nfcabNeuKlNr}[2][no]{%
  \IfEqCase{#2}{%
        {01}{\ensuremath{\satzCLab{#1}{\paarKlNr{01}{1}}{3}{3,2}}}%
        {02}{\ensuremath{\satzCLab{#1}{\paarKlNr{01}{2}}{3}{3,1}}}%
        {03}{\ensuremath{\satzCLab{#1}{\paarKlNr{02}{1}}{2}{3,2}}}%
        {04}{\ensuremath{\satzCLab{#1}{\paarKlNr{02}{2}}{1}{3,1}}}%
        {05}{\ensuremath{\satzCLab{#1}{\paarKlNr{03}{1}}{3}{3,0}}}%
        {06}{\ensuremath{\satzCLab{#1}{\paarKlNr{04}{1}}{0}{3,0}}}%
        {07}{\ensuremath{\satzCLab{#1}{\paarKlNr{05}{1}}{2}{2,1}}}%
        {08}{\ensuremath{\satzCLab{#1}{\paarKlNr{05}{2}}{1}{2,1}}}%
        {09}{\ensuremath{\satzCLab{#1}{\paarKlNr{06}{1}}{2}{2,0}}}%
        {10}{\ensuremath{\satzCLab{#1}{\paarKlNr{06}{2}}{1}{1,0}}}%
        {11}{\ensuremath{\satzCLab{#1}{\paarKlNr{07}{1}}{0}{2,0}}}%
        {12}{\ensuremath{\satzCLab{#1}{\paarKlNr{07}{2}}{0}{1,0}}}%
        {13}{\ensuremath{\satzCLab{#1}{\paarKlNr{08}{1}}{3}{3,2,1}}}%
        {14}{\ensuremath{\satzCLab{#1}{\paarKlNr{09}{1}}{2}{3,2,1}}}%
        {15}{\ensuremath{\satzCLab{#1}{\paarKlNr{09}{2}}{1}{3,2,1}}}%
        {16}{\ensuremath{\satzCLab{#1}{\paarKlNr{10}{1}}{3,2}{3,2,1}}}%
        {17}{\ensuremath{\satzCLab{#1}{\paarKlNr{10}{2}}{3,1}{3,2,1}}}%
        {18}{\ensuremath{\satzCLab{#1}{\paarKlNr{11}{1}}{2,1}{3,2,1}}}%
        {19}{\ensuremath{\satzCLab{#1}{\paarKlNr{12}{1}}{3}{3,2,0}}}%
        {20}{\ensuremath{\satzCLab{#1}{\paarKlNr{12}{2}}{3}{3,1,0}}}%
        {21}{\ensuremath{\satzCLab{#1}{\paarKlNr{13}{1}}{2}{3,2,0}}}%
        {22}{\ensuremath{\satzCLab{#1}{\paarKlNr{13}{1}}{1}{3,1,0}}}%
        {23}{\ensuremath{\satzCLab{#1}{\paarKlNr{14}{1}}{0}{3,2,0}}}%
        {24}{\ensuremath{\satzCLab{#1}{\paarKlNr{14}{1}}{0}{3,1,0}}}%
        {25}{\ensuremath{\satzCLab{#1}{\paarKlNr{15}{1}}{3,2}{3,2,0}}}%
        {26}{\ensuremath{\satzCLab{#1}{\paarKlNr{15}{2}}{3,1}{3,1,0}}}%
        {27}{\ensuremath{\satzCLab{#1}{\paarKlNr{16}{1}}{3,0}{3,2,0}}}%
        {28}{\ensuremath{\satzCLab{#1}{\paarKlNr{16}{2}}{3,0}{3,1,0}}}%
        {29}{\ensuremath{\satzCLab{#1}{\paarKlNr{17}{1}}{2,0}{3,2,0}}}%
        {30}{\ensuremath{\satzCLab{#1}{\paarKlNr{17}{2}}{1,0}{3,1,0}}}%
        {31}{\ensuremath{\satzCLab{#1}{\paarKlNr{18}{1}}{2}{2,1,0}}}%
        {32}{\ensuremath{\satzCLab{#1}{\paarKlNr{18}{2}}{1}{2,1,0}}}%
        {33}{\ensuremath{\satzCLab{#1}{\paarKlNr{19}{1}}{0}{2,1,0}}}%
        {34}{\ensuremath{\satzCLab{#1}{\paarKlNr{20}{1}}{2,1}{2,1,0}}}%
        {35}{\ensuremath{\satzCLab{#1}{\paarKlNr{21}{1}}{2,0}{2,1,0}}}%
        {36}{\ensuremath{\satzCLab{#1}{\paarKlNr{21}{2}}{1,0}{2,1,0}}}%
        {37}{\ensuremath{\satzCLab{#1}{\paarKlNr{22}{1}}{3}{3,2,1,0}}}%
        {38}{\ensuremath{\satzCLab{#1}{\paarKlNr{23}{1}}{2}{3,2,1,0}}}%
        {39}{\ensuremath{\satzCLab{#1}{\paarKlNr{23}{2}}{1}{3,2,1,0}}}%
        {40}{\ensuremath{\satzCLab{#1}{\paarKlNr{24}{1}}{0}{3,2,1,0}}}%
        {41}{\ensuremath{\satzCLab{#1}{\paarKlNr{25}{1}}{3,2}{3,2,1,0}}}%
        {42}{\ensuremath{\satzCLab{#1}{\paarKlNr{25}{2}}{3,1}{3,2,1,0}}}%
        {43}{\ensuremath{\satzCLab{#1}{\paarKlNr{26}{1}}{3,0}{3,2,1,0}}}%
        {44}{\ensuremath{\satzCLab{#1}{\paarKlNr{27}{1}}{2,1}{3,2,1,0}}}%
        {45}{\ensuremath{\satzCLab{#1}{\paarKlNr{28}{1}}{2,0}{3,2,1,0}}}%
        {46}{\ensuremath{\satzCLab{#1}{\paarKlNr{28}{2}}{1,0}{3,2,1,0}}}%
        {47}{\ensuremath{\satzCLab{#1}{\paarKlNr{29}{1}}{3,2,1}{3,2,1,0}}}%
        {48}{\ensuremath{\satzCLab{#1}{\paarKlNr{30}{1}}{3,2,0}{3,2,1,0}}}%
        {49}{\ensuremath{\satzCLab{#1}{\paarKlNr{30}{2}}{3,1,0}{3,2,1,0}}}%
        {50}{\ensuremath{\satzCLab{#1}{\paarKlNr{31}{1}}{2,1,0}{3,2,1,0}}}%
    }[\PackageError{nfcab}{Undefined option to nfcab: #1}{}]%
}%
\newcommand{\beweisDirektDetailliert}[1]{}   %
\newcommand{\kuerzerFuerLICS}[1]{}           %
\newlength{\absetzenkleinerlaenge}
\newcommand{\uhrzeit}{\makebox{\number\stunde\,:\,\ifnum\minute<10 %
0\fi\number\minute}}
\newcommand{\ifLatexThree}[2]{\@ifpackageloaded{xparse}{#1}{#2}}
\newcommand{\ifAMSmath}[2]{\@ifpackageloaded{amsmath}{#1}{#2}}
\newcommand{\ifMathSCR}[2]{\@ifpackageloaded{mathrsfs}{#1}{#2}}
\newcommand{\ifMathHyperREF}[2]{\@ifpackageloaded{hyperref}{#1}{#2}}
	\NewDocumentCommand{\headword}{s o m}{\IfBooleanTF{#1}{#3}{\textbf{#3}}\IfNoValueTF{#2}{\index{#3}}{\index{#2}}}%
	\def\@headword#1{\textbf{#1}\index{#1}}%
	\def\@@headword#1{#1\index{#1}}%
	\def\headword#1{\@ifstar\@headword{#1}\@@headword{#1}}%
\newcommand{\textlabelmarker}[1]{%
	\protected@edef\@currentlabel{#1}%
	\phantomsection%
}
\newcommand{\textlabel}[2]{%
	\textlabelmarker{#1}%
	#1\label{#2}%
}
\newcommand*{\centernot}{%
	\mathpalette\@centernot
}
\def\@centernot#1#2{%
	\mathrel{%
		\rlap{%
			\settowidth\dimen@{$\m@th#1{#2}$}%
			\kern.5\dimen@
			\settowidth\dimen@{$\m@th#1=$}%
			\kern-.5\dimen@
			$\m@th#1\not$%
		}%
		{#2}%
	}%
}
\DeclareRobustCommand\nmableitSymb{\mathrel|\mkern-.5mu\joinrel\sim} %
\newcommand{\nmableit}{\ensuremath{\mbox{$\,\nmableitSymb\,$}}} %
	\newcommand{\seqref}[1]{\hyperref[{#1}]{\textup{\tagform@split{\getrefnumber{#1}}}}}%
	\newcommand{\seqref}[1]{\textup{\tagform@split{\getrefnumber{#1}}}}%
\newcommand\tagform@split[1]{%
	\begingroup
	\m@th\normalfont(\ignorespaces #1\unskip\@@italiccorr)%
	\endgroup
}
\newcommand{\leqnomode}{\tagsleft@true\let\veqno\@@leqno}
\newcommand{\reqnomode}{\tagsleft@false\let\veqno\@@eqno}
\newcommand{\pushright}[1]{\ifmeasuring@#1\else\omit\hfill$\displaystyle#1$\fi\ignorespaces}
\newcommand{\pushleft}[1]{\ifmeasuring@#1\else\omit$\displaystyle#1$\hfill\fi\ignorespaces}
\newcommand{\specialcell}[1]{\ifmeasuring@#1\else\omit$\displaystyle#1$\ignorespaces\fi}
\DeclareMathOperator{\Cn}{Cn}
\newif\ifpostulatepresent\postulatepresentfalse
\newcommand{\dotcup}{\mathbin{\dot\cup}}
\DeclareMathOperator{\Th}{Th}
\newcommand{\LpropSig}[1]{\ensuremath{\mathcal{L}_{#1}}}
\newcommand{\UnivSig}[1]{\ensuremath{\Omega_{#1}}}
\newcommand{\RevOp}{\ast}
\newcommand{\ChangeOp}{\circ}
\newcommand{\thistheoremname}{}
\newenvironment{namedpostulate}[1]
{\renewcommand{\thistheoremname}{Postulate #1}%
	\begin{genericpostulate}}
	{\end{genericpostulate}}
\newtheorem*{genericpostulate}{\thistheoremname}
\newcommand{\modelTransformationName}{model transformation}
\newcommand{\modelTransformationsName}{model transformations}
\newcommand{\ModelTransformationsName}{Model transformations}
\newcommand{\languageIndependentName}{language independent}
\newcommand{\languageIndependenceName}{language independence}
\newcommand{\LiP}{(Language Independent P)\xspace}
\newcommand{\citeay}[1]{\citeauthor{#1} \shortcite{#1}}    %
\newcommand{\cb}[1]{#1}
\newcommand{\jh}[1]{#1}
\title{Model Transformations for Ranking Functions and Total Preorders}
\author{%
Jonas Haldimann$^1$\and
Christoph Beierle$^1$ %
\affiliations
$^1$FernUniversität in Hagen, Germany\\
\emails
\{jonas.haldimann, christoph.beierle\}@fernuni-hagen.de
}
\begin{document}

\maketitle

\begin{abstract}
	In the field of knowledge representation, the considered epistemic states are often based on propositional interpretations, also called worlds.
	E.g., %
	epistemic states of agents can be modelled by ranking functions or total preorders on worlds.
However, there are usually different ways of how to describe a real world situation in a propositional language;
this
	can be seen as different points of view on the same situation.
	In this paper we introduce the concept of \emph{model transformations} to convert an epistemic state
        from one point of view to another point of view,
        yielding a novel notion of equivalence of epistemic states.
        We show how the well-known advantages of syntax-splitting, originally developed for belief sets and later extended to
        representation of epistemic states and to nonmonotonic reasoning, can be exploited for belief revision via model transformation by uncovering splittings not being present before.
        Furthermore, we characterize situations where belief change operators 
        commute with model transformations.

\end{abstract}

\section{Introduction}
\label{sec:introduction}

In the field of knowledge representation, the considered objects are often based on propositional logic.
A statement can be modelled as a logical formula directly; a conditional  \(\satzCL{B}{A}\) formalizes a defeasible rule ``If \(A\) then usually \(B\)'' for logical formulas \(A, B\).
Other representations are based on propositional interpretations, also called (possible) worlds.
Epistemic states of agents can be modelled, e.g., by a ranking function assigning a rank to each world, a total preorder on the set of worlds, or a belief set which can be represented by the set of its models.
Common to these approaches is that they assume an underlying (propositional) signature on which the formulas are based and which determines the set of propositional interpretations occurring in the epistemic states.
When choosing which part of a situation is described with which atomic sentence,
there are often different ways to model the same subject.

\begin{example}
	\label{ex:coin}
	Two programs P1 and P2 are running on a computer.
	Usually either both or none of the programs has access to the internet, depending on whether the computer is connected to  a network with an internet connection.
	But sometimes a weird firewall configuration causes the situation that one of the program has internet access but not the other program.
	\jh{We could model this situation with two signature variables \(a, b\) where \(a\) is true if program P1 can access the internet and \(b\) is true if program P2 can access the internet.
	Another way of modelling would be to introduce two variables \(c, d\) where \(c\) is true if P1 has internet access and \(d\) is true if a weird firewall configuration is in place that allows exactly one program to access the internet.
	While the two ways of choosing are different, the four interpretations of each signature correspond to the same four elementary events.
	For example, the situation where P1 has internet access but P2 not is modelled by \(a\overline{b}\) %
	and by \(cd\), %
	respectively.}
\end{example}

The different approaches to modelling in the example can be seen as different points of view on the same situation.
\jh{In this paper, we introduce the concept of \emph{\modelTransformationsName} that} allows transforming between these points of view by establishing a connection between the worlds induced by each signature.
As epistemic states that can be transformed into each other by \modelTransformationsName\ can be seen as different points of view on the same \jh{situation.} %

Epistemic states are often used in combination with operations realizing belief changes or inferences.
If an operator uses only the semantic side of an epistemic state based on worlds, then applying this operator and a \modelTransformationName\ is equivalent to applying the \modelTransformationName\ first and then the operator.
We formalize such operations as \emph{\languageIndependentName}.

\jh{
One important property of an epistemic state is if it allows for syntax splittings. \citeay{Parikh99} introduced the concept of syntax splittings %
to formulate the revision postulate (P) describing that only the relevant parts of the belief base should be changed by belief revision operators.
Later the notion of syntax splitting was extended to ranking functions and total preorders on worlds \cite{Kern-IsbernerBrewka17}. 
As syntax splittings depend on the language used, applying a \modelTransformationName\ to an epistemic state might yield a new or a finer syntax splitting.}
In this paper, we generalize the syntax splitting postulate (P) to also consider syntax splittings that can be obtained by  a \modelTransformationName. %

To summarize, the main contributions of this paper are
\begin{itemize}
	\item the introduction of \emph{\modelTransformationsName} as transformations between different points of view for ranking functions and total preorders,
	\item the introduction of \emph{\languageIndependenceName} as property for operators on epistemic states,
	\item a generalized syntax splitting postulate for belief sets that considers \modelTransformationsName. %
\end{itemize}

In Section~\ref{sec:background} we briefly recall the required background on conditional logic.
In Section~\ref{sec:signature_transformations} we introduce \modelTransformationsName, and in Section~\ref{sec:language_independence} we consider \languageIndependentName\ operations.
We investigate syntax splitting in combination with \modelTransformationsName\ in Section~\ref{sec:syntax_splitting} before concluding and discussing future work in Section~\ref{sec:conclusion}.

\section{Background: Logic, OCFs, and TPOs}
\label{sec:background}

A \emph{(propositional) signature} is a finite set \(\Sigma\) of identifiers;
we denote the propositional language over \(\Sigma\) by \(\LpropSig{\Sigma}\).
Usually, we denote elements of the signatures with lowercase letters \(a, b, c, \dots\) and formulas with uppercase letters \(A, B, C, \dots\).
We may denote %
\(A \wedge B\) by \(AB\) and %
\(\neg A\) by \(\ol{A}\) for brevity of notation.
The set of interpretations over %
\(\Sigma\) is denoted as \(\UnivSig{\Sigma}\).
Interpretations are also called \emph{worlds} and \UnivSig{\Sigma} is called the \emph{universe}.
An interpretation \(\omega \in \UnivSig{\Sigma}\) is a \emph{model} of a formula \(A \in \LpropSig{\Sigma}\) if \(A\) holds in \(\omega\), %
denoted as \(\omega \models A\).
The set of models of a formula over \(\Sigma\) is \(\Mod_\Sigma(A) = \{\omega \in \UnivSig{\Sigma} \mid \omega \models A\}\).
A formula \(A\) \emph{entails} a formula \(B\) if \(\Mod_\Sigma(A) \subseteq \Mod_\Sigma(B)\), denoted as \(A \models B\).

The deductive closure of a set \(S\)  of formulas is \(\Cn(S) = \{F \in \LpropSig{\Sigma} \mid S \models F\}\); for formulas \(A, B, \dots\) we abbreviate \(\Cn(\{A, B, \dots\})\) \jh{by} \(\Cn(A, B, C, \dots)\).
A \emph{belief set} \(K\) is a deductively closed set of formulas, i.e., \(\Cn(K) = K\).
The theory for a set of interpretations \(I \subseteq \Omega_{\Sigma}\) is %
\(\Th(I) = \{F \in \LpropSig{\Sigma} \mid \omega \models F \text{ for every } \omega \in I\}\).
For sets \(S, T\) of formulas we define \(S + T = \Cn(S \cup T)\).

A \emph{conditional} \(\satzCL{B}{A}\) connects two formulas \(A, B\) and represents the rule ``If \(A\) then usually \(B\)''.
The formula \(A\) is called the \emph{antecedent} and the formula \(B\) the \emph{consequent} of the conditonal.
A finite set of conditionals is called a \emph{conditional belief base}.
We use a three-valued semantics for conditionals in this paper \cite{deFinetti37orig}.
For a world \(\omega\) a conditional \(\satzCL{B}{A}\) is either \emph{verified} by \(\omega\) if \(\omega \models AB\), \emph{falsified} by \(\omega\) if \(\omega \models A\ol{B}\), or \emph{not applicable} to \(\omega\) if \(\omega \models \ol{A}\).

\jh{Two popular semantics for conditionals and conditional knowledge bases} are ranking functions and total preorders.

A \emph{ranking function} \cite{Spohn88}, also called \emph{ordinal conditional function} (OCF), is a function \(\kappa: \UnivSig{\Sigma} \rightarrow \mathbb{N}_0 \cup \{\infty\}\) such that \(\kappa^{-1}(0) \neq \emptyset\).
The intuition of an OCF is that the rank of a world is lower if the world is more plausible.
Ranking functions are extended to formulas by \(\kappa(A) = \min_{\omega \in \mathit{Mod}(A)} \kappa(\omega)\) with \(\min_{\emptyset} (\ldots) = \infty\).
An OCF \(\kappa\) models a conditional \(\satzCL{B}{A}\), denoted as \(\kappa \models \satzCL{B}{A}\) if \(\kappa(AB) < \kappa(A\ol{B})\), i.e., if the verification of the conditional is strictly more plausible than its falsification.
An OCF \(\kappa\) models a conditional belief set \(\Delta\), denoted as \(\kappa \models \Delta\) if \(\kappa \models r\) for every \(r \in \Delta\).
A \emph{total preorder} (TPO) is a total, reflexive, and transitive binary  relation.
The meaning of a total preorder \(\preceq\) on \(\UnivSig{\Sigma}\) as model for an epistemic state is that \(\omega_1\) is at least as plausible as \(\omega_2\) iff \(\omega_1 \preceq \omega_2\) for \(\omega_1, \omega_2 \in \UnivSig{\Sigma}\).
Total preorders on worlds are extended to formulas by \(A \preceq B\) if \(\min(\Mod_\Sigma(A), \preceq) \preceq \min(\Mod_\Sigma(B), \preceq)\).
A total preorder \(\preceq\) models a conditional \(\satzCL{B}{A}\), denoted as \(\mathord{\preceq} \models \satzCL{B}{A}\) if \(AB \prec A\ol{B}\), i.e., if the verification of the conditional is strictly more plausible than its falsification.
A total preorder \(\preceq\) models a conditional belief set \(\Delta\), denoted as \(\mathord{\preceq} \models \Delta\), if \(\mathord{\preceq} \models r\) for every \(r \in \Delta\).

Belief sets, OCFs, and TPOs can each be used to model the epistemic state of an agent.
In an evolving world, an agent needs to update her beliefs to account for new information.
The process of including new beliefs into the current epistemic state and resolving possible inconsistencies is called belief revision.
Such belief revisions can be formalized by a belief revision operator \(\RevOp\) mapping the epistemic state before the  revision and the incoming information to the new epistemic state; \(K \RevOp A\) denotes the result of revising epistemic state \(K\) with the information \(A\)
\jh{(e.g., \cite{AlchourronGardenforsMakinson85,DarwichePearl1997,Parikh99}).
General belief changes are denoted as \(K \ChangeOp A\).}

To draw inferences from conditional beliefs inductive inference operators can be used.
\emph{Inductive inference operators} \cite{KernIsbernerBeierleBrewka2020KR} formalize the inductive completion of a conditional belief base according to an inference method;
they are defined as
	a mapping \(C: \R \mapsto \nmableit_{\R}\) that maps each belief base to an inference relation such that direct inference  (DI) and trivial vacuity (TV) are fulfilled, \jh{i.e.,
if \(\satzCL{B}{A} \in \Delta\) implies \(A \nmableit_{\R} B\) and
if \(\Delta = \emptyset\) and \(A \nmableit_{\R} B\) imply \(A \models B\).}

\section{Model Transformations}
\label{sec:signature_transformations}

In this paper, we want to formalize \jh{changes %
as illustrated
in Example~\ref{ex:coin}.}
The two approaches to model the situation in the example resulted in different descriptions of the same \jh{real-world situations}.
For the general case, we define model transformations as bijections between two universes over possibly different signatures.

\begin{definition}[\modelTransformationName]
	Let \(\Sigma_1, \Sigma_2\) \jh{be signatures of the same size.}
	A \emph{\modelTransformationName} is a bijective mapping \(\phi: \Omega_{\Sigma_1} \rightarrow \Omega_{\Sigma_2}\).
\end{definition}

\ModelTransformationsName\ \(\phi\) can be lifted to OCFs and TPOs.

\begin{definition}[\modelTransformationsName\ for OCFs and TPOs]
	For \(\kappa\) over \(\Sigma_1\) we define \(\phi(\kappa) = \kappa'\) where \(\kappa'\) is an OCF  over \(\Sigma_2\) such that \(\kappa'(\omega) = \kappa(\phi^{-1}(\omega))\) for any \(\omega \in \LpropSig{\Sigma_2}\).
	For \(\preceq\) over \(\Sigma_1\) we define \(\phi(\preceq) = {\preceq'}\) where \(\preceq'\) is a TPO over \(\Sigma_2\) such that \(\omega \preceq' \omega^\ast\) iff \(\phi^{-1}(\omega) \preceq \phi^{-1}(\omega^\ast)\) for any \(\omega, \omega^\ast \in \LpropSig{\Sigma_2}\).
\end{definition}

\jh{This definition %
implies}
\(\kappa(\omega) = \kappa'(\phi(\omega))\) for every \(\omega \in \Omega_{\Sigma}\) and
\(\omega \preceq \omega^\ast\) iff \(\phi(\omega) \preceq' \phi(\omega^\ast)\) for \(\omega, \omega^\ast \in \Omega_{\Sigma}\).

Note that \modelTransformationsName\ go far beyond renamings of the underlying signature as in \cite{JH_BeierleHaldimann2022AMAI}.
While each bijection \(\sigma: \Sigma_1 \rightarrow \Sigma_2\) induces a \modelTransformationName\ \(\phi_\sigma: \Omega_{\Sigma_1} \rightarrow \Omega_{\Sigma_2}\) by \(\phi_\sigma(\omega) = \sigma(\omega)\), in general, \modelTransformationsName\ cannot be obtained from signature renamings.

\begin{example}
	\label{ex:sig_transformation}
	Consider the signatures \(\Sigma_{abc} = \{a, b, c\}\) and \(\Sigma_{xyz} = \{x, y, z\}\).
	The function \jh{\(\phi: \Omega_{\Sigma_{abc}} \rightarrow \Omega_{\Sigma_{xyz}}\), %
\begin{align*}
	abc &\mapsto \overline{x}\overline{y}\overline{z} & \overline{a}bc &\mapsto xy\overline{z} & 
	ab\overline{c} &\mapsto x\overline{y}z & \overline{a}b\overline{c} &\mapsto \overline{x}\overline{y}z \\
	a\overline{b}c &\mapsto xyz & \overline{a}\overline{b}c &\mapsto \overline{x}yz &
	a\overline{b}\overline{c} &\mapsto \overline{x}y\overline{z} & \overline{a}\overline{b}\overline{c} &\mapsto x\overline{y}\overline{z}
\end{align*}}
	is a \modelTransformationName.
	We have \(\phi(\kappa_{abc}) = \kappa_{xyz}\) where \(\kappa_{abc}\) and \(\kappa_{xyz}\) are the OCFs displayed in Figure~\ref{fig:ex_ocf}.
\end{example}

\begin{figure}
	\begin{subfigure}[t]{.49\linewidth}
		\centering
		\begin{tikzpicture}
			\tikzstyle{number} = [anchor=south west]
			\tikzstyle{world}=[rectangle, rounded corners=4, fill=blue!20,anchor=south west, inner sep=2]
			\tikzstyle{markedworld}=[rectangle, rounded corners=4, fill=red!20, draw=black, anchor=south west]
			\def\xdiff{0.9}
			\def\ydiff{0.5}
			
			\node at (-1*\xdiff, 0*\ydiff) [number] {0};
			\node at (-1*\xdiff, 1*\ydiff) [number] {1};
			\node at (-1*\xdiff, 2*\ydiff) [number] {2};
			\node at (-1*\xdiff, 3*\ydiff) [number] {3};
			\node at (-1*\xdiff, 4*\ydiff) [number] {4};
			\node at (-1*\xdiff, 5*\ydiff) [number] {5};

			\node at (0*\xdiff, 0*\ydiff) [world] {$\bar{a}bc$};
			\node at (0*\xdiff, 1*\ydiff) [world] {$a\bar{b}c$};
			\node at (1*\xdiff, 1*\ydiff) [world] {$a\bar{b} \bar{c}$};
			\node at (0*\xdiff, 2*\ydiff) [world] {$\bar{a}\bar{b}\bar{c} $};
			\node at (1*\xdiff, 2*\ydiff) [world] {$\bar{a}\bar{b}c$};
			\node at (1*\xdiff, 3*\ydiff) [world] {$abc$};
			\node at (0*\xdiff, 4*\ydiff) [world] {$ab\bar{c}$};
			\node at (1*\xdiff, 5*\ydiff) [world] {$\bar{a}b\bar{c} $};
		\end{tikzpicture}
		\caption{OCF \(\kappa_{abc}\) over \(\Sigma = \{a, b, c\}\) without non-trivial syntax splitting.}
		\label{fig:ex_ocf_abc}
	\end{subfigure}
	\hfill
	\begin{subfigure}[t]{.49\linewidth}
		\centering
		\begin{tikzpicture}
			\tikzstyle{number} = [anchor=south west]
			\tikzstyle{world}=[rectangle, rounded corners=4, fill=blue!20,anchor=south west, inner sep=2]
			\tikzstyle{markedworld}=[rectangle, rounded corners=4, fill=red!20, draw=black, anchor=south west]
			\def\xdiff{0.9}
			\def\ydiff{0.5}
			
			\node at (-1*\xdiff, 0*\ydiff) [number] {0};
			\node at (-1*\xdiff, 1*\ydiff) [number] {1};
			\node at (-1*\xdiff, 2*\ydiff) [number] {2};
			\node at (-1*\xdiff, 3*\ydiff) [number] {3};
			\node at (-1*\xdiff, 4*\ydiff) [number] {4};
			\node at (-1*\xdiff, 5*\ydiff) [number] {5};

			\node at (0*\xdiff, 0*\ydiff) [world] {$xy\bar{z}$};
			\node at (0*\xdiff, 1*\ydiff) [world] {$xyz$};
			\node at (1*\xdiff, 1*\ydiff) [world] {$\bar{x}y\bar{z}$};
			\node at (0*\xdiff, 2*\ydiff) [world] {$x\bar{y}\bar{z}$};
			\node at (1*\xdiff, 2*\ydiff) [world] {$\bar{x}yz$};
			\node at (1*\xdiff, 3*\ydiff) [world] {$\bar{x}\bar{y}\bar{z}$};
			\node at (0*\xdiff, 4*\ydiff) [world] {$x\bar{y}z$};
			\node at (1*\xdiff, 5*\ydiff) [world] {$\bar{x}\bar{y}z$};
		\end{tikzpicture}
		\caption{OCF function \(\kappa_{xyz}\) over \(\Sigma = \{x, y, z\}\) with syntax splitting \(\{x\} \dotcup \{y, z\}\).}
		\label{fig:ex_ocf_xyz}
	\end{subfigure}
	\caption{Ranking functions from Example~\ref{ex:sig_transformation}}
	\label{fig:ex_ocf}
\end{figure}
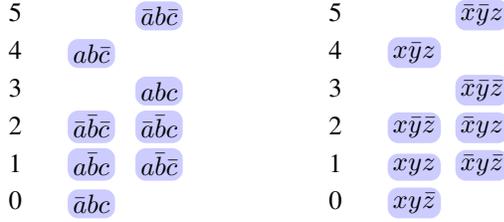

Applying \modelTransformationsName\ to formulas is more complex.
If we consider the syntactic structure of a formula, we cannot apply \modelTransformationsName\ directly.
But if we consider only formulas in canonical disjunctive normal form (CDNF) in clause form, i.e., formulas represented by the set of their models, we can apply \modelTransformationsName\ by  applying the transformations to each model of the formula, i.e., for \(A = \{\omega_1, \dots, \omega_n\}\) we define \(\phi(A) = \{\phi(\omega_1), \dots, \phi(\omega_n)\}\).
To work with formulas in this paper, from now on we assume that every formula is in CDNF.
\jh{\(\phi\) is lifted to conditionals by \(\phi(\satzCL{B}{A}) = \satzCL{\phi(B)}{\phi(A)}\) and to
belief sets by \(\phi(K) = \Th(\phi(\Mod(K)))\).}

\jh{\ModelTransformationsName\ are compatible with the models relation %
and with logical entailment.}

\begin{proposition}
	\label{prop:models_syntax_indep}
	Let \(\phi: \Omega_{\Sigma_1} \rightarrow \Omega_{\Sigma_2}\) be a \modelTransformationName.
	Let \(\omega \in \Omega_{\Sigma_1}\) and \(A, B \in \LpropSig{\Sigma_1}\).
 	\jh{Then \(\omega \models A\) iff \(\phi(\omega) \models \phi(A)\); and \(A \models B\) iff \(\phi(A) \models \phi(B)\).}
	
	\jh{Let \(\kappa\) be an OCF over \(\Sigma_1\) and \(\preceq\) be a TPO over \(\Sigma_1\).
	Let \(\satzCL{B}{A}\) be a conditional over \(\Sigma_1\).
Then
	\(\satzCL{B}{A} \models \kappa\) iff \(\phi(\satzCL{B}{A}) \models \phi(\kappa)\); and 
	\(\satzCL{B}{A} \models {\preceq}\) iff \(\phi(\satzCL{B}{A}) \models \phi(\preceq)\).}
\end{proposition}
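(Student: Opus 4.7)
The plan is to unfold the definitions and exploit, at every step, the fact that $\phi$ is a bijection between $\Omega_{\Sigma_1}$ and $\Omega_{\Sigma_2}$. Because every formula is assumed to be in CDNF, a formula $A \in \LpropSig{\Sigma_1}$ is identified with its model set, so satisfaction collapses to set membership and entailment to set inclusion. This reduces the two propositional claims to properties of bijections between finite sets.

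For the first two statements, I would argue as follows. Since $A$ is in CDNF, $\omega \models A$ iff $\omega \in A$, and $\phi(A) = \{\phi(\omega') : \omega' \in A\}$ by definition. Hence $\phi(\omega) \models \phi(A)$ iff $\phi(\omega) \in \phi(A)$, which by injectivity of $\phi$ is equivalent to $\omega \in A$, i.e., to $\omega \models A$. For entailment, $A \models B$ iff $A \subseteq B$ (viewing formulas as model sets), and since $\phi$ is a bijection, $A \subseteq B$ iff $\phi(A) \subseteq \phi(B)$, i.e., $\phi(A) \models \phi(B)$.

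For the conditional claim over OCFs, I first record that $\phi$ commutes with the Boolean operations we need: bijectivity yields $\phi(A \cap B) = \phi(A) \cap \phi(B)$, and because $\phi$ is a bijection of the whole universe, $\phi(\Omega_{\Sigma_1} \setminus B) = \Omega_{\Sigma_2} \setminus \phi(B)$, i.e., $\phi(\overline{B}) = \overline{\phi(B)}$. Consequently $\phi(AB) = \phi(A)\phi(B)$ and $\phi(A\overline{B}) = \phi(A)\overline{\phi(B)}$. Using the defining equation $\phi(\kappa)(\omega') = \kappa(\phi^{-1}(\omega'))$ and the bijection to re-index the minimum, I compute
\[
\phi(\kappa)(\phi(A)\phi(B)) = \min_{\omega' \in \phi(AB)} \kappa(\phi^{-1}(\omega')) = \min_{\omega \in AB} \kappa(\omega) = \kappa(AB),
\]
and analogously $\phi(\kappa)(\phi(A)\overline{\phi(B)}) = \kappa(A\overline{B})$. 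Therefore $\kappa(AB) < \kappa(A\overline{B})$ iff $\phi(\kappa)(\phi(A)\phi(B)) < \phi(\kappa)(\phi(A)\overline{\phi(B)})$, which is the desired equivalence $\kappa \models \satzCL{B}{A}$ iff $\phi(\kappa) \models \phi(\satzCL{B}{A})$.

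The TPO case is proved by the same pattern, but using that $\phi(\preceq)$ is defined precisely so that $\omega_1 \preceq \omega_2$ iff $\phi(\omega_1) \phi(\preceq) \phi(\omega_2)$. Hence the $\preceq$-minimal elements of $\Mod(AB)$ are mapped by $\phi$ to the $\phi(\preceq)$-minimal elements of $\phi(\Mod(AB)) = \Mod(\phi(AB))$, so $AB \prec A\overline{B}$ iff $\phi(AB) \phi(\prec) \phi(A\overline{B})$, yielding the equivalence for TPOs. The only potential obstacle is the bookkeeping around Boolean operations on CDNF-represented formulas; once the three identities $\phi(A \cap B) = \phi(A) \cap \phi(B)$, $\phi(\overline{B}) = \overline{\phi(B)}$, and $\phi(\kappa)(\phi(X)) = \kappa(X)$ (and the analogue for TPOs) are stated cleanly, every remaining step is a direct substitution.
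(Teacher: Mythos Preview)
Your proof is correct. The paper does not actually include a proof of this proposition; it is stated and then immediately used, so there is nothing to compare against beyond noting that your argument is precisely the routine unfolding of definitions the authors evidently had in mind (CDNF formulas as model sets, bijections preserving membership, inclusion, intersection, complement, and minima).
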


Proposition~\ref{prop:models_syntax_indep} ensures that \(A \in K\) iff \(\phi(A) \in \phi(K)\) for any formula \(A\), belief set \(K\), and \modelTransformationName\ \(\phi\).

\section{Language Independent Operations}
\label{sec:language_independence}

While some operators depend on the valuation of signature variables in each world, many operators for belief change only consider %
worlds as atomic objects. %
With \modelTransformationsName\ we can formalize belief revision operators that do not depend on syntax at all. These operators are independent of the application of a \modelTransformationName.

\begin{definition}[\languageIndependentName\ belief change operators]
	A belief change operator \(\ChangeOp\) is called \emph{\languageIndependentName} if \(\phi(X) \circ \phi(Y) = \phi(X \circ Y)\) for each \modelTransformationName\ \(\phi\).
\end{definition}

Many belief change operators \jh{in the literature} are \languageIndependentName; they focus only on the semantic side of epistemic states and formulas.

\begin{proposition}
	\label{prop:lang_indep_change}
	The following belief change operators are \languageIndependentName:
	\begin{itemize}
		\item moderate, natural, and lexicographic contraction \cite{RamachandranNayakOrgun12} for TPOs
		\item natural revision \cite{Boutilier96} and simple lexicographic revision \cite{NayakPagnuccoPeppas2003} for TPOs 
		\item \jh{expansion \(+\)} \cite{AlchourronGardenforsMakinson85} and trivial update \cite{Parikh99} for belief sets.
	\end{itemize}
\jh{Dalal's revision operator \cite{Dalal88} for belief sets is not language independent.}
\end{proposition}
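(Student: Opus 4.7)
The plan is to handle the positive cases uniformly and then give a direct counterexample for Dalal's operator. The key observation behind every positive case is that each listed operator is definable purely through the models relation, set-theoretic operations on worlds, and (where applicable) the TPO ordering; all three structures are preserved by a model transformation $\phi$, the first two by Proposition~\ref{prop:models_syntax_indep} together with the bijectivity of $\phi$, and the third by the very definition of $\phi(\preceq)$.

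Concretely, I would first record a simple auxiliary fact: for any $S \subseteq \Omega_{\Sigma_1}$ and any TPO $\preceq$ on $\Omega_{\Sigma_1}$, one has $\phi(\min(S,\preceq)) = \min(\phi(S),\phi(\preceq))$, which is immediate from the bijectivity and order-preservation of $\phi$. Combined with $\phi(\Mod(A)) = \Mod(\phi(A))$ for CDNF formulas, this suffices to walk through each operator mechanically. Natural revision redefines $\preceq$ so that $\min(\Mod(A),\preceq)$ sits strictly below everything else while comparisons among the remaining worlds are unchanged; every clause transports along $\phi$. Simple lexicographic revision ranks worlds first by whether they satisfy $A$ and then by the old $\preceq$; again every clause transports. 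The three contraction operators are specified by conditions involving membership in $\Mod(A)$, $\Mod(\neg A)$, and $\preceq$-comparisons, so the same argument applies. For the belief-set expansion, $\Mod(K+A) = \Mod(K) \cap \Mod(A)$ and $\phi$ commutes with intersection on sets of worlds, giving $\phi(K+A) = \phi(K) + \phi(A)$; trivial update is handled analogously in terms of set operations on models.

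For Dalal's operator $\RevOp_D$ I would exhibit a counterexample over $\Sigma = \{a,b\}$. Let $K = \Cn(a \wedge b)$ and $A = \neg a \vee \neg b$. Since the Hamming distances from $\Mod(K) = \{ab\}$ are $d(a\bar b,ab) = d(\bar a b,ab) = 1$ and $d(\bar a \bar b,ab) = 2$, we get $\Mod(K \RevOp_D A) = \{a\bar b, \bar a b\}$. Now let $\phi$ swap $\bar a b$ with $\bar a \bar b$ and fix the other two worlds; then $\phi(K) = K$ and $\phi(A) = A$ since $\Mod(A)$ is preserved as a set. Hence $\phi(K) \RevOp_D \phi(A) = \Th(\{a\bar b, \bar a b\})$, whereas $\phi(K \RevOp_D A) = \Th(\phi(\{a\bar b, \bar a b\})) = \Th(\{a\bar b, \bar a \bar b\}) = \Cn(\neg b)$. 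These belief sets differ, for instance because $\neg b$ lies in the second but not in the first, so Dalal's operator is not language independent. The only real obstacle in the positive direction is being precise about the clause-by-clause definition of each TPO operator, since each has slightly different formulation; but once expressed in the uniform semantic language of $\Mod(\cdot)$, set operations, and $\preceq$-comparisons, every verification reduces to direct invocations of Proposition~\ref{prop:models_syntax_indep} and the auxiliary fact above.
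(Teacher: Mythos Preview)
Your proposal is correct and follows essentially the same approach as the paper: both argue that the positive cases hold because each operator is definable purely in terms of model membership and TPO comparisons, structures preserved by any model transformation. Your treatment is considerably more detailed than the paper's proof sketch, and in particular you supply an explicit counterexample for Dalal's operator where the paper merely observes that Dalal's revision relies on Hamming distance; this is a genuine improvement in rigor but not a different strategy.
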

\begin{proof}[\jh{Proof sketch}]
This can be verified by considering the definitions of these operations.
\jh{The %
	moderate, natural, and lexicographic contraction as well as the natural and simple lexicographic revision} can be defined in a way that only considers the position of each world in the relation before the belief change and whether the world is a model of the input formula.
Expansion and trivial update can be also defined in a way that only considers if the worlds are  a model of the initial belief set and if they are a model of the input formula.
\jh{Dalal's revision is based on a TPO on worlds that compares the number of differently valued variables in different worlds.}
\end{proof}

\jh{
We can see that \languageIndependenceName\ is a property that occurs naturally in many revision operators,
but not every revision is \languageIndependentName. %
}

We can define \languageIndependenceName\ for inductive inference operators as well.

\begin{definition}[\languageIndependentName\ inference operators]
	An inductive inference operator \(C: \mathcal{R} \mapsto \nmableit_{\R}\) is called \emph{\languageIndependentName} if, for every \modelTransformationName\ \(\phi\),
it holds that 
\(A \nmableit_{\mathcal{R}} B \thickspace\) iff \(\thickspace\phi(A) \nmableit_{\phi(\mathcal{R})} \phi(B)\).
\end{definition}

There are many examples of \languageIndependentName\ inductive inference operators in the literature.

\begin{proposition}
	\label{prop:lang_indep_inf}
	P-entailment \cite{Adams1965}, system Z \cite{Pearl1990systemZTARK} and lexicographic inference \cite{Lehmann1995} %
	are \languageIndependentName\ inductive inference operators.
\end{proposition}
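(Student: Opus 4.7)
The plan is to exploit Proposition~\ref{prop:models_syntax_indep}: since a model transformation \(\phi\) preserves the models-of relation and the satisfaction relation for conditionals, it preserves every piece of semantic information on which the three inference operators are built. I will show that each operator is constructed from data (sets of ranking models, tolerance partitions, or ranks of worlds) that is transported canonically by \(\phi\), so inference commutes with \(\phi\).

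First, I would record the following auxiliary observation: for any conditional \(\satzCL{B}{A}\) and world \(\omega\) over \(\Sigma_1\), \(\omega\) verifies (resp.\ falsifies, is not applicable to) \(\satzCL{B}{A}\) if and only if \(\phi(\omega)\) verifies (resp.\ falsifies, is not applicable to) \(\phi(\satzCL{B}{A})\). This is immediate from Proposition~\ref{prop:models_syntax_indep} applied to \(AB\), \(A\ol{B}\), and \(\ol{A}\). From this it follows at once that the map \(\kappa \mapsto \phi(\kappa)\) is a bijection between the OCFs over \(\Sigma_1\) satisfying \(\R\) and those over \(\Sigma_2\) satisfying \(\phi(\R)\), and likewise for TPOs.

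For P-entailment, I would use the semantic characterization: \(A \iiopP_{\R} B\) iff every ranking function (equivalently, every preferential model) satisfying \(\R\) satisfies \(\satzCL{B}{A}\). The bijection \(\kappa \mapsto \phi(\kappa)\) from the previous paragraph, combined with the equivalence \(\kappa \models \satzCL{B}{A}\) iff \(\phi(\kappa) \models \phi(\satzCL{B}{A})\) from Proposition~\ref{prop:models_syntax_indep}, immediately gives \(A \iiopP_{\R} B\) iff \(\phi(A) \iiopP_{\phi(\R)} \phi(B)\). For system~Z, I would argue that the Pearl partition \(\R = \R_0 \dot\cup \cdots \dot\cup \R_k\) induced by tolerance is invariant under \(\phi\): a conditional \(r\) is tolerated by a set \(\R'\) iff there is a world verifying \(r\) and not falsifying any element of \(\R'\), and this property is preserved by the auxiliary observation above. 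Hence the Z-partition of \(\phi(\R)\) is exactly the image of the Z-partition of \(\R\), so the ranks assigned to conditionals agree, and therefore \(\phi(\kappa^Z_{\R}) = \kappa^Z_{\phi(\R)}\). Applying Proposition~\ref{prop:models_syntax_indep} once more yields the desired equivalence. Lexicographic inference is handled analogously, noting that the lexicographic rank of a world depends only on the multiset of conditionals it falsifies (stratified by the Z-partition), and both this multiset and the stratification are preserved by \(\phi\); thus \(\phi(\kappa^{\mathit{lex}}_{\R}) = \kappa^{\mathit{lex}}_{\phi(\R)}\) and the result follows.

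The main obstacle is not any single step but rather the need to reference precise definitions of the three inference operators and to verify in each case that all ingredients (tolerance, Z-partitions, lexicographic ranks) are genuinely semantic and hence transported by \(\phi\); the uniform mechanism is always the conjunction of Proposition~\ref{prop:models_syntax_indep} with the preservation of the verify/falsify/not-applicable classification, which makes the whole argument essentially a diagram-chase once the right construction of each operator is chosen.
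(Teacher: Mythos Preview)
Your proposal is correct and follows the same idea as the paper: both rest on the observation that a model transformation preserves the verify/falsify/not-applicable classification of each world with respect to each conditional, so any inference operator built solely from this data commutes with $\phi$. The paper merely states this in one sentence without working through the individual operators, whereas you spell out how the ranking-model characterization of P-entailment, the tolerance partition for system~Z, and the stratified falsification counts for lexicographic inference are each transported by $\phi$; your version is thus a fully fleshed-out execution of the paper's proof sketch rather than a different argument.
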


Similar to the belief change operators in Proposition~\ref{prop:lang_indep_change}, the inference operators in Proposition~\ref{prop:lang_indep_inf} \jh{are defined} in a way that only considers which conditionals in the belief base are verified and which are falsified by each world. %
Hence, they are \languageIndependentName.

\section{Transformations and Syntax Splitting}
\label{sec:syntax_splitting}

An important property of an epistemic state is whether it has a syntax splitting.
A syntax splitting is a partition of the signature describing that a belief set, a total preorder, or a rankinfunktion, respectively, consists of independent information on different parts of the signature partitioning \cite{Parikh99,Kern-IsbernerBrewka17}.
\jh{There are belief revision postulates describing that only the relevant part of the epistemic state %
must be revised.}
Respecting syntax splittings in belief revision leads to more intuitive revision operators %
and can also reduce the computational complexity of the belief revision by allowing to process several small parts of an epistemic state independently.

While syntax splittings are a highly desirable property they do depend on the underlying signature.
Interesting about \modelTransformationsName\ is that they can uncover new syntax splittings not being present before the transformation.

\begin{example}
	The OCF \(\kappa_{abc}\) from Example~\ref{ex:sig_transformation} does not have a non-trivial syntax splitting.
	The OCF \(\phi(\kappa_{abc}) = \kappa_{xyz}\) has the syntax splitting \(\{\{x\}, \{y, z\}\}\). 
\end{example}

To capture syntax splittings that exist only after application of a \modelTransformationName\ we introduce the following generalized notion of syntax splitting.

\begin{definition}[syntax splitting with respect to \modelTransformationsName] %
	\label{def:syn_split_mod_trans}
	Let \(\Sigma\) be a signature.
		Let \(X\) be a belief set, a TPO, or an OCF over \(\Sigma\).
		\jh{A \emph{syntax splitting for \(X\) with respect to \modelTransformationsName}} is a pair \((P, \phi)\) consisting of a partitioning \(P\) of \(\Sigma\) and a \modelTransformationName\ \(\phi: \Omega_{\Sigma} \rightarrow \Omega_{\Sigma}\) such that \(P\) is a syntax splitting for \(\phi(X)\).
	\end{definition}

	Note that the restriction to \modelTransformationsName\ from \(\Sigma\) to \(\Sigma\) does not limit the kind of partitions that occur in the syntax splittings.
	If we have a belief set, a TPO, or an OCF \(X\) and there is a \modelTransformationName\ \(\phi': \Omega_{\Sigma} \rightarrow \Omega_{\Sigma'}\) such that \(P\) is a syntax splitting for \(\phi'(X)\), \jh{then we can concatenate \(\phi'\) with the model transformation \(\phi_\sigma\) induced by a bijection \(\sigma: \Sigma' \rightarrow \Sigma\) on the signatures to obtain a \modelTransformationName\ \(\phi = \phi_\sigma \circ \phi'\) such that \(\phi: \Omega_{\Sigma} \rightarrow \Omega_{\Sigma}\) and \((\phi_\sigma(P), \phi)\) is a syntax splitting with respect to \modelTransformationsName\ for \(X\).}

	Syntax splitting with respect to \modelTransformationsName\ is a generalization of syntax splitting.

	\begin{proposition}
		\jh{If a belief set, a TPO, or an OCF \(X\) has a syntax splitting \(P\),  then \((P, \mathit{id})\) is a syntax splitting for \(X\) with respect to \modelTransformationsName\ with the identity \(\mathit{id}\).}
	\end{proposition}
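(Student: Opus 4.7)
The plan is to show that the identity function acts as a neutral element for model transformations on each of the three kinds of epistemic states considered, and then invoke Definition~\ref{def:syn_split_mod_trans} directly. First I would note that the identity $\mathit{id}: \Omega_\Sigma \to \Omega_\Sigma$ is a bijection and therefore qualifies as a model transformation in the sense introduced in Section~\ref{sec:signature_transformations}.

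Next I would verify, by spelling out the definitions of lifting a model transformation to each type of object, that $\mathit{id}(X) = X$ in each case. For an OCF $\kappa$ over $\Sigma$, the lifted transformation gives $\mathit{id}(\kappa)(\omega) = \kappa(\mathit{id}^{-1}(\omega)) = \kappa(\omega)$ for every $\omega \in \Omega_\Sigma$, so $\mathit{id}(\kappa) = \kappa$. For a TPO ${\preceq}$ the lifted transformation satisfies $\omega \mathop{\mathit{id}(\preceq)} \omega^\ast$ iff $\mathit{id}^{-1}(\omega) \preceq \mathit{id}^{-1}(\omega^\ast)$ iff $\omega \preceq \omega^\ast$, hence $\mathit{id}(\preceq) = {\preceq}$. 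For a belief set $K$, using $\mathit{id}(A) = A$ for any CDNF formula $A$ together with the definition $\mathit{id}(K) = \Th(\mathit{id}(\Mod(K))) = \Th(\Mod(K)) = K$ (since $K$ is deductively closed), we also get $\mathit{id}(K) = K$.

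Combining these two observations, if $P$ is a syntax splitting for $X$, then $P$ is trivially a syntax splitting for $\mathit{id}(X) = X$. By Definition~\ref{def:syn_split_mod_trans}, this is precisely the statement that $(P, \mathit{id})$ is a syntax splitting for $X$ with respect to model transformations, which is what was to be shown.

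There is no real obstacle here: the whole argument reduces to checking that the identity map behaves as the identity on each semantic representation, and this is immediate from the definitions. The proposition should thus be treated as a sanity-check establishing that Definition~\ref{def:syn_split_mod_trans} is a bona fide generalization, rather than as a result requiring a genuinely new idea.
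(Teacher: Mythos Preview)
Your proposal is correct. The paper itself does not supply a proof for this proposition (it is stated and immediately followed by an example), so there is nothing to compare against; your argument simply spells out the straightforward verification that $\mathit{id}$ is a model transformation with $\mathit{id}(X)=X$, which is exactly the sanity check the authors evidently regarded as immediate.
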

	
	\begin{example}
		Consider %
		again Example~\ref{ex:sig_transformation}.
		Then \((\{\{a\}, \{b, c\}\}, \psi)\) is a syntax splitting with respect to \modelTransformationsName\ for the OCF \(\kappa_{abc}\) with \(\psi = \sigma \circ \phi\) and \(\sigma: \Sigma_{xyz} \rightarrow \Sigma_{abc}; x \mapsto a, y \mapsto b, z \mapsto c\).
	\end{example}

	For belief sets, i.e., deductively closed sets of propositional formulas, Parikh introduced the postulate (P) to describe \jh{that only the information about the relevant sub-signatures in the syntax splitting should be changed.}
		
	\begin{namedpostulate}{(P), see \cite{Parikh99}}
		Let \(K\) be a belief set and \(A\) a formula. If there is a syntax splitting \(\{\Sigma_1, \Sigma_2\}\) for \(K\), i.e., if there are \(C \in \LpropSig{\Sigma_1}, D \in \LpropSig{\Sigma_2}\) such that \(K = \Cn(C, D)\), and \(A \in \LpropSig{\Sigma_1}\), then
		\jh{
		\(
		K \RevOp A = (\Cn(C) \RevOp A) + D
		\)}.
	\end{namedpostulate}
	
	The postulate (P) not only ensures a more sensible outcome of belief revision operators, it is also useful for the computation of belief changes.
	Assume that we want to revise a belief set \(K = \Cn(C, D)\) \jh{with a syntax splitting \(\{\Sigma_1, \Sigma_2\}\) and \(C \in \LpropSig{\Sigma_1}, D \in \LpropSig{\Sigma_2}\) with a formula \(A \in \LpropSig{\Sigma_1}\).}
	If we use a revision operator that fulfils (P), we only have to calculate \(\Cn(C) \RevOp A\) and add \(D\) unchanged to obtain \(K \RevOp A\).
	
	\jh{We adapt} the syntax splitting postulate to the notion of syntax splitting with respect to \modelTransformationsName.
	
	\begin{namedpostulate}{\LiP}
		Let \(K\) be a belief set and \(A\) a formula. If \(K\) has a syntax splitting with respect to a \modelTransformationName\ \((\{\Sigma_1, \Sigma_2\}, \phi)\) and \(\phi(A) \in \LpropSig{\Sigma_1}\), then
		\(
			K \RevOp A = \phi^{-1}((\phi(K) \cap \LpropSig{\Sigma_1}) \RevOp \phi(A) + (\phi(K) \cap \LpropSig{\Sigma_2})).
		\)
	\end{namedpostulate}
	
	\jh{The intuition of \LiP is that  %
if the belief base has a syntax splitting with respect to \modelTransformationsName, then we should be able to conduct the revision from this point of view and respect the syntax splitting.}

		As the syntax splitting exists only in the transformed belief set, 
		we have to apply the \modelTransformationName\ of the synatx splitting to the belief set
		to separate the two parts.
		\jh{Using that \(\Cn(C, D) \cap \LpropSig{\Sigma_1} = \Cn(C)\) for \(C \in \LpropSig{\Sigma_1}, D \in \LpropSig{\Sigma_2}\) and \jh{\(\{\Sigma_1, \Sigma_2\}\) is a partition} of \(\Sigma\), the part of the belief set containing the information about \(\Sigma_i\) after the \modelTransformationName\ is %
		\(\phi(K) \cap \LpropSig{\Sigma_{i}}\)  for \(i \in \{1, 2\}\).}

	\jh{For revision operators that behave especially well with respect to \modelTransformationsName, \LiP can already be inferred from (P).}
	
	\begin{proposition}
		\jh{A \languageIndependentName\ revision \(\RevOp\) fulfils \LiP iff it fulfils (P).}
	\end{proposition}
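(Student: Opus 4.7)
The plan is a direct equivalence proof that exploits how language independence lets us commute $\RevOp$ with an arbitrary model transformation $\phi$, while the identity transformation realizes ordinary syntax splittings as a special case of splittings with respect to model transformations.

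For the direction \LiP $\Rightarrow$ (P), I would take the identity \(\id\) on $\Omega_\Sigma$ as a model transformation and observe that whenever $K = \Cn(C, D)$ with $C \in \LpropSig{\Sigma_1}$, $D \in \LpropSig{\Sigma_2}$, the pair $(\{\Sigma_1, \Sigma_2\}, \id)$ is a syntax splitting for $K$ with respect to model transformations. Since $\id(K) = K$, $\id(A) = A$, and $\id^{-1}$ is $\id$, specializing \LiP to this case yields $K \RevOp A = (K \cap \LpropSig{\Sigma_1}) \RevOp A \,+\, (K \cap \LpropSig{\Sigma_2})$. Using $\Cn(C,D) \cap \LpropSig{\Sigma_i} = \Cn(C)$ resp.\ $\Cn(D)$ (as remarked in the paper immediately before the postulate), this reduces exactly to the conclusion of (P).

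For the converse (P) $\Rightarrow$ \LiP (assuming $\RevOp$ is language independent), suppose $K$ has a syntax splitting with respect to model transformations $(\{\Sigma_1, \Sigma_2\}, \phi)$ and $\phi(A) \in \LpropSig{\Sigma_1}$. By definition, $\phi(K)$ has the ordinary syntax splitting $\{\Sigma_1, \Sigma_2\}$, so (P) applies to $\phi(K)$ with the input $\phi(A)$, giving
\[
\phi(K) \RevOp \phi(A) \;=\; (\phi(K) \cap \LpropSig{\Sigma_1}) \RevOp \phi(A) \;+\; (\phi(K) \cap \LpropSig{\Sigma_2}).
\]
Now I would use language independence of $\RevOp$ to rewrite the left-hand side as $\phi(K \RevOp A)$, and then apply $\phi^{-1}$ to both sides. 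Since $\phi$ is a bijection on worlds, $\phi^{-1}$ is also a model transformation, hence by Proposition~\ref{prop:models_syntax_indep} it preserves logical entailment and thus deductive closure, yielding precisely the equation in \LiP.

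I expect no substantial obstacle; the only subtle point is making sure that the definition of $\phi$ on belief sets through models and $\Th$ interacts cleanly with intersection and with $+ = \Cn(\cdot\cup\cdot)$, so that $\phi$ and $\phi^{-1}$ can be pulled in and out of the expressions as required. This is however immediate from Proposition~\ref{prop:models_syntax_indep} together with the observation that $\phi^{-1}$ is itself a model transformation, so the computation goes through term by term.
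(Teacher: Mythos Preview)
Your proposal is correct and follows essentially the same route as the paper: for \LiP $\Rightarrow$ (P) you instantiate with the identity transformation, and for (P) $\Rightarrow$ \LiP you apply (P) to $\phi(K)$ and $\phi(A)$, invoke language independence to identify $\phi(K)\RevOp\phi(A)$ with $\phi(K\RevOp A)$, and then apply $\phi^{-1}$. The paper's proof is the same chain of equalities, just written in a slightly different order, and your extra remarks about $\phi^{-1}$ being a model transformation and about the intersection identity merely make explicit what the paper leaves implicit.
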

	\begin{proof}
		To see that \LiP implies (P) consider the syntax splitting \((\{\Sigma_1, \Sigma_2\}, \mathit{id})\).

		For the other direction,
		let \(\RevOp\) be  a \languageIndependentName\ revision operator %
		that fulfils (P).
		Let \(K\) be a belief set such that \((\{\Sigma_1, \Sigma_2\}, \phi)\) is a syntax splitting with respect to \modelTransformationsName\ for \(K\) and let \(A\) be a formula such that \(\phi(A) \in \LpropSig{\Sigma_1}\).
		Then we have %
\(
	K \RevOp A = \phi^{-1}(\phi(K \RevOp A)) 
	= \phi^{-1}(\phi(K) \RevOp \phi(A)) 
	= \phi^{-1}((\phi(K) \cap \LpropSig{\Sigma_1}) \RevOp \phi(A) + (\phi(K) \cap \LpropSig{\Sigma_2}))
	\).
	\end{proof}
	
	\jh{
	\LiP can be applied in strictly more situations than (P), implying that \modelTransformationsName\ can uncover syntax splittings not being present before.
	\begin{proposition}
		\label{prop:LiP_more_applicable}
		There are belief sets that fulfil the prerequisites for \LiP but not for (P).
	\end{proposition}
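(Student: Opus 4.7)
The plan is to exhibit a concrete belief set $K$ together with a model transformation $\phi$ witnessing a non-trivial syntax splitting of $\phi(K)$, while showing that $K$ itself admits no non-trivial syntax splitting in the ordinary sense. Once this is in place, it suffices to pick any formula $A$ whose $\phi$-image lives inside one side of the transformed splitting.

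First I would fix $\Sigma=\{a,b,c\}$ and take $K$ to be the belief set over $\Sigma$ whose model set is $\Mod(K)=\{abc,\ol{a}\ol{b}\ol{c}\}$. The key observation is that any ordinary splitting $K=\Cn(C,D)$ with $C\in\LpropSig{\Sigma_1}$, $D\in\LpropSig{\Sigma_2}$ and $\{\Sigma_1,\Sigma_2\}$ a non-trivial partition of $\Sigma$ would force $\Mod(K)$ to be a product set of the form $\Mod(C)\times\Mod(D)$ across the variables of $\Sigma_1$ and $\Sigma_2$. Since the only non-trivial partitions of $\Sigma$ are $\{\{a\},\{b,c\}\}$, $\{\{b\},\{a,c\}\}$, and $\{\{c\},\{a,b\}\}$, a short case check rules each of them out: for example, under $\{\{a\},\{b,c\}\}$, the restrictions of $abc$ and $\ol{a}\ol{b}\ol{c}$ to $\{b,c\}$ give both $bc$ and $\ol{b}\ol{c}$, which together with the restrictions $a$ and $\ol{a}$ to $\{a\}$ would yield four product worlds, not two. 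The other two partitions fail analogously. Hence $K$ fulfils the prerequisites of (P) only in the trivial case $\Sigma_2=\emptyset$.

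Next I would construct a bijection $\phi:\UnivSig{\Sigma}\to\UnivSig{\Sigma}$ that maps the model set $\{abc,\ol{a}\ol{b}\ol{c}\}$ to a product-shaped set such as $\{abc,\ol{a}bc\}=\{a,\ol{a}\}\times\{bc\}$. Concretely, set $\phi(abc)=abc$ and $\phi(\ol{a}\ol{b}\ol{c})=\ol{a}bc$, and extend $\phi$ to any bijection on the remaining six worlds. Then $\phi(K)$ is the belief set whose models are $\{abc,\ol{a}bc\}$, i.e.\ $\Cn(bc)$, which admits the non-trivial syntax splitting $\{\{a\},\{b,c\}\}$. Thus $(\{\{a\},\{b,c\}\},\phi)$ is a syntax splitting for $K$ with respect to model transformations in the sense of Definition~\ref{def:syn_split_mod_trans}.

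Finally, to complete the prerequisites of \LiP\ it is enough to pick a formula $A$ such that $\phi(A)\in\LpropSig{\{b,c\}}$ or $\phi(A)\in\LpropSig{\{a\}}$; for instance, take $A=\phi^{-1}(b)$ (using the CDNF-level action of $\phi^{-1}$ on models). Then $K$ and $A$ satisfy the prerequisites of \LiP\ with a genuinely non-trivial splitting, whereas for this $K$ no such instance of (P) exists, which proves the proposition. The only mild obstacle is the verification that $\Mod(K)=\{abc,\ol{a}\ol{b}\ol{c}\}$ cannot be written as a product across any of the three non-trivial bipartitions of $\Sigma$, but this is a finite check on at most $2^3=8$ worlds.
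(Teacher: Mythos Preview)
Your proposal is correct and follows essentially the same strategy as the paper: exhibit a concrete belief set with no non-trivial ordinary syntax splitting, then construct a model transformation under which the image belief set does split. The paper's witness is more economical, working over the two-element signature $\{a,b\}$ with $K=\Cn(a\ol{b}\vee\ol{a}b)$ (tied back to the running programs/internet example), whereas you use three variables; but the argument structure is identical.
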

	\begin{proof}
		Assume we have the belief set \(K = \Cn(a\ol{b} \vee \ol{a}b)\) over the signature \(\Sigma = \{a, b\}\) from Example~\ref{ex:coin}, i.e., we belive that exactly one of the two programs has internet access. Now we want to revise \(K\) with \(A = ab \vee \ol{a}\ol{b}\), i.e., we learn that we are actually in the usual situation that both or no programs have internet access.
		Even if we chose a revision operator fulfilling (P), we would have to consider the complete signature for this revision as \(K\) does not have a syntax splitting.
		However, \(K\) does have the syntax splitting with respect to \modelTransformationsName\ \((\{\{a\}, \{b\}\}, \phi)\) with \(\phi = \{ab \mapsto c\ol{d}, a\ol{b} \mapsto cd, \ol{a}b \mapsto \ol{c}d, \ol{a}\ol{b} \mapsto \ol{c}\ol{d}\}\).
		Divergent from Definition~\ref{def:syn_split_mod_trans} %
		 we use, just as in Example~\ref{ex:coin}, the different signature \(\{c,d\}\) for the transformed formulas to enhance readability,
		 where %
		 \(c\) is true if P1 has internet access and \(d\) is true if a weird firewall configuration is in place that allows exactly
		 one program to access the internet. 
In \(\phi(K)\) the information about these two things are independent.
		If we know that our revision operator fulfils \LiP, we can calculate \(K \RevOp A\) by calculating \((\phi(K) \cap \LpropSig{\Sigma_{1}}) \RevOp \phi(A) = \Cn(d) \RevOp \ol{d}\) on a smaller signature, combining it with \(\phi(K) \cap \LpropSig{\Sigma_{2}} = \top\), and transforming it back with \(\phi^{-1}\).
	\end{proof}
In the proof of Proposition~\ref{prop:LiP_more_applicable} we see how \LiP ensures that syntax splitting with respect to \modelTransformationsName\ is respected in a situation where (P) is not applicable.
Additionally, knowing that \(\RevOp\) fulfils \LiP allows to calculate the revision on only a part of the signature.
In applications with larger signatures where only a small part of the belief set is relevant for a revision utilizing syntax splittings with respect to \modelTransformationsName\ can be of advantage for the computation.
}
\section{Conclusion and Further work}
\label{sec:conclusion}

In this short paper we introduced the notion of \modelTransformationsName.
We outlined several applications of this notion, among them the definition of equivalence with respect to \modelTransformationsName, the definition of \languageIndependenceName\ as property of belief change and inference operators, and a generalized version of Parikh's postulate (P).

In our current work, we want to further investigate syntax splittings postulates in the context of \modelTransformationsName. Especially, we want to transfer the idea of \LiP to syntax splittings on OCFs and TPOs.
Another open question is if there are other properties of a belief base, OCF, or TPO besides syntax splitting that can be \jh{improved by} applying \modelTransformationsName.

\bibliographystyle{kr}
\newcommand{\verzeichnisBibtex}{\string~/BibTeXReferencesSVNlink}
\bibliography{%
	\verzeichnisBibtex/referencesCB,%
	\verzeichnisBibtex/referencesCBSekr,%
	\verzeichnisBibtex/literatur_sauerwald,%
	\verzeichnisBibtex/literatur_sauerwald_contr,%
	\verzeichnisBibtex/PapersKI,%
	\verzeichnisBibtex/ReferKI,%
	\verzeichnisBibtex/ReferKI2,%
	\verzeichnisBibtex/literatur_ce,%
	\verzeichnisBibtex/literatureJH%
}

\end{document}